\def\marginpar#1{\ignorespaces}
\newtheorem{theorem}{Theorem}[section]
\newtheorem{lemma}[theorem]{Lemma}
\newtheorem{proposition}[theorem]{Proposition}
\newtheorem{Assumption}[theorem]{Assumption}
\numberwithin{equation}{section}
\begin{document}
\title[Learning MNLs]{Learning an arbitrary mixture of two multinomial logits}

\author[Wenpin Tang]{{Wenpin} Tang}
\address{Department of Industrial Engineering and Operations Research, Columbia University
} \email{wt2319@columbia.edu}

\date{\today} 
\begin{abstract}
In this paper, we consider mixtures of multinomial logistic models (MNL), which are known to $\epsilon$-approximate any random utility model. 
Despite its long history and broad use, rigorous results are only available for learning a uniform mixture of two MNLs.
Continuing this line of research, we study the problem of learning an arbitrary mixture of two MNLs.
We show that the identifiability of the mixture models may only fail on an algebraic variety of a negligible measure.
This is done by reducing the problem of learning a mixture of two MNLs to the problem of solving a system of univariate quartic equations.
We also devise an algorithm to learn any mixture of two MNLs using a polynomial number of samples and a linear number of queries, provided that a mixture of two MNLs over some finite universe is identifiable. 
Several numerical experiments and conjectures are also presented.
\end{abstract}

\maketitle
\textit{Key words :} complexity, cubic equations, identifiability, mixture models, multinomial logits, multivariate polynomials, polynomial identifiability, quartic equations, symbolic computations.

\section{Introduction}

\quad This paper is concerned with the problem of learning a mixture of two multinomial logistic models from data.
Understanding an individual, or a user's rational behavior when facing a list of alternatives is a classical topic in economic theory.
In the era of data deluge, it has wide applications, especially for recommender systems where a user decides which of several competing products to purchase, 
and companies like Amazon, Netflix and Yelp look for which products are most relevant to a specific user.
A powerful tool to study user behavior is {\em discrete choice models}, 
and we refer to McFadden \cite{MF74, MF81, MF83, MF01} for the modern literature.
The most well-studied class of discrete choice models are the class of {\em random utility models}, 
which find roots in the work of Thurstone \cite{Thur27}, and were formally introduced by Marschak \cite{Mar60}.
The book of Train \cite{Train09} contains a thorough review on this subject. 
{\em Mixtures of multinomial logistic models} (also known as {\em mixed logits}) are a family of parametric random utility models, which have been widely used since 1980 \cite{BM80, CD80}, following earlier works of Bradley-Terry \cite{BT52},
and Luce-Plackett \cite{Luce, Plackett} on the multinomial logits.
Despite its broad use in practice, e.g. the EM algorithm \cite{DLR77}, there are few works on efficient algorithms to learn any non-trivial mixture of multinomial logistic models.
Chierichetti et al. \cite{CKT18} took the first step to develop polynomial-time algorithms to learn a uniform mixture of two multinomial logits.
They also pointed out that generalizing the results to non-uniform mixtures, or mixtures of more than two components is challenging.

\quad The purpose of this paper is to go beyond the uniform mixture, and study the problem of reconstruction and polynomial-time algorithms to learn an arbitrary mixture of two multinomial logits from user data.
To proceed further, we give a little more background.
A {\em multinomial logistic model}, or simply an {\em MNL} over a universe $\mathcal{U}$ is specified by 
a mapping from any non-empty subset $S \subseteq \mathcal{U}$ to a distribution over $S$.
The set $S$ is referred to as the {\em choice set} or the {\em slate}, from which a user selects exactly one item.
The MNL requires a {\em weight function} $w: \mathcal{U} \to \mathbb{R}_{+}$ which gives a positive weight to each item in the universe $\mathcal{U}$. 
The model then assigns probability to each $u \in S$ proportional to its weight:
\begin{equation*}
\mathbb{P}(u | S): = \frac{w(u)}{\sum_{v \in S} w(v)}, \quad \mbox{for each } u \in S.
\end{equation*}
One can regard $\mathbb{P}(u|S)$ as the conditional probability of selecting item $u$ given the alternatives in $S$.
We normalize the weight function $w$ by $\sum_{u \in \mathcal{U}} w(u) = 1$, so
$w : \mathcal{U} \to \Delta_{|\mathcal{U}|-1}$ 
where $\Delta_{|\mathcal{U}|-1}:=\{(a_1, \ldots, a_{|\mathcal{U}|}) \in \mathbb{R}^{|\mathcal{U}|}_{+}: \sum_{i = 1}^{|\mathcal{U}|} a_i = 1\}$ is the $(|\mathcal{U}| - 1)$-simplex
with $|\mathcal{U}|$ the number of items in $\mathcal{U}$.
Given sufficient data of slates $S$ with resulting choices of $u \in S$, 
it is possible to estimate the weight $w$ via maximum likelihood estimation.
The underlying problem is convex, and is easy to solve by gradient methods.

\quad In spite of simple interpretation and computational advantages, MNL is criticized for being too restrictive on the model behavior across related subsets, and thus lack of flexibility. 
This drawback is due to the fact that MNL is defined as a family of functions mapping any $S \subseteq \mathcal{U}$ to a distribution over $S$, based on a single fixed weight function.
One way to resolve this issue is to remove the constraint that the likelihood of each item is always proportional to a fixed weight. 
The aforementioned random utility model does the job: it is defined by a distribution over vectors, where each vector assigns a value to each item of $\mathcal{U}$.
A user then draws a random vector from this distribution, and selects the item of $S$ with the largest value.
McFadden and Train \cite{MT00} observed that any random utility model can be approximated arbitrarily close by a mixture of MNLs. 
Thus, learning general random utility models reduces to learning mixtures of MNLs.
This is the reason why mixtures of MNLs are widely recognized by practitioners.
However, almost all existing learning approaches are empirical, 
and there are few provable results on learning non-trivial mixtures of MNLs.
The only exception is \cite{CKT18}, 
where the authors resolved positively the problem of leaning a uniform mixture of MNLs,
assuming an oracle access to the true distributions over all choice sets.
Indeed, they do not learn the mixture of MNLs from sample data. 
Here we take a further step to study a possibly non-uniform mixture of two MNLs given access to samples.
We provide a few positive results on optimal learning algorithms.

\quad The object of interest is specified by the triple $(a,b,\mu)$,
where $a, b: \mathcal{U} \to \Delta_{|\mathcal{U}|-1}$ are two weight functions, and $\mu \in (0,1)$ is the {\em mixing weight}.
A $\mu$-mixture of two MNLs $(a,b)$ assigns to item $u$ in the set $S \subseteq \mathcal{U}$ the probability
\begin{equation*}
\mathbb{P}^{\mu}(u | S) := \mu \, \frac{a(u)}{\sum_{v \in S} a(v)} + (1 - \mu) \, \frac{b(u)}{\sum_{v \in S} b(v)}.
\end{equation*}
The goal of the learning problem is to reconstruct the parameters $(a,b, \mu)$ 
from the sample outcomes of the slate induced by the mixture.
In this paper, we assume that the mixing weight $\mu \in (0,1)$ is known.
So the problem consists of learning the weight functions $(a,b)$ in a $\mu$-mixture of MNLs.
Our program is illustrated by the following diagram, which consists of two basic problems:
\begin{figure}[h]
\centering
\includegraphics[width=0.5 \columnwidth]{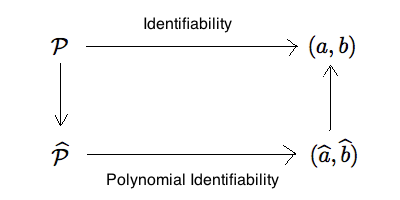}
\caption{$\mathcal{P}$ is the true distribution of the $\mu$-mixture of MNLs, and $\widehat{\mathcal{P}}$ is an estimate of $\mathcal{P}$.}
\end{figure}

\begin{enumerate}[itemsep = 3 pt]
\item[(i)]
{\em Identifiability}: 
Is the $\mu$-mixture model $\mathcal{P}$ uniquely characterized by a pair of weights $(a,b)$?
That is, if the $\mu$-mixtures of MNLs $(a,b)$ and $(a',b')$ agree on each $S \subseteq \mathcal{U}$ then $a = a'$, $b = b'$.
\item[(ii)]
{\em Polynomial identifiability}: Given that the $\mu$-mixture model $\mathcal{P}$ is identifiable, is there an algorithm to learn or to estimate the weights $(\widehat{a}, \widehat{b})$ with polynomial running time and using a polynomial number of samples?
\end{enumerate}

\quad We start with the identifiability of the $\mu$-mixture of two MNLs.
The main result of \cite{CKT18} showed that for $\mu = 1/2$, 
$(i)$ if $|\mathcal{U}| \ge 3$, any uniform mixture of two MNLs is identifiable;
$(ii)$ there is an algorithm which learns any uniform mixture of two MNLs with $\mathcal{O}(|\mathcal{U}|)$ queries to the oracle returning the $\mu$-mixture distribution over items of the slate.
Theorem \ref{thm:CKT} below contains more detailed statements.
The idea relies on the fact that any uniform mixture of two MNLs over a $3$-item universe is identifiable,
and one can reconstruct the weight functions by querying to $2$- and $3$-slates, 
i.e. subsets $S$ with $|S| = 2, 3$.
But this algorithm fails for a non-uniform mixture of two MNLs, 
since a non-uniform mixture of two MNLs on a $3$-item universe is not necessarily identifiable (see Section \ref{sc3}, or \cite[Theorem 3]{CKT18}).
Nevertheless, the latter is `rarely' the case which is the first main result of this paper.

\begin{theorem}
\label{thm:main}
Let $n: = |\mathcal{U}| \ge 3$, and $\mu \in (0,1)$.
If the $\mu$-mixtures of MNLs $(a,b)$ and $(a',b')$ agree, i.e. have the same distribution on each $S \subseteq \mathcal{U}$, 
then 
\begin{equation*}
(a,b) \ne (a', b') \quad \mbox{implies} \quad R_n(a', b') = 0,
\end{equation*}
where $R_n$ is some polynomial specified later in the proof.
\end{theorem}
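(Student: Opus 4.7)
The plan is to reduce identifiability to the analysis of a univariate quartic equation through successive elimination, and to define $R_n$ as a resultant detecting when this quartic has a spurious admissible root. Fix a pivot element $1 \in \mathcal{U}$, treat $(a,b)$ as data, and interpret the matching of distributions as polynomial equations in $(a', b')$. For each $u \ne 1$, the 2-slate equation on $\{1, u\}$ reads
\begin{equation*}
\mu \frac{a(1)}{a(1)+a(u)} + (1-\mu) \frac{b(1)}{b(1)+b(u)} = \mu \frac{a'(1)}{a'(1)+a'(u)} + (1-\mu) \frac{b'(1)}{b'(1)+b'(u)}.
\end{equation*}
Clearing denominators yields a polynomial bilinear in $(a'(u), b'(u))$ once $(a'(1), b'(1))$ is fixed, so $b'(u)$ can be written as a rational function of $a'(u)$, $a'(1)$, and $b'(1)$. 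The 3-slate equation on $\{1, u, v\}$ then couples $(a'(u), a'(v))$; after substituting these rational expressions it reduces to a single polynomial identity in the scalar $t := a'(1)$, with $b'(1)$ eliminated via the simplex normalization together with one further 2-slate equation. A degree count should show this polynomial has degree four---the univariate quartic promised by the abstract.

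The quartic admits a trivial root corresponding to $(a', b') = (a, b)$; dividing it out leaves a cubic whose admissible roots parametrize the possible spurious solutions. Consistency across different pairs $(u, v)$ produces a system of cubics that must share a common admissible root for a spurious $(a', b')$ to exist. The existence of such a common root translates, through iterated resultants, into a polynomial condition $R_n(a', b') = 0$ on the would-be spurious solution; this is precisely the polynomial of the theorem.

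The principal obstacle, I expect, is twofold. First, one must verify that the elimination is non-degenerate, i.e., that the univariate quartic is not identically zero on an open set of generic data; exhibiting a single triple $(a, b, \mu)$ for which identifiability holds---for instance by perturbation from the uniform mixture case treated in \cite{CKT18}---suffices to ensure that $R_n$ is a nontrivial polynomial. Second, one must argue that matching on all larger slates $|S| \ge 4$ is implied by matching on 2- and 3-slates through the multiplicative ratio structure of the MNL, so that the 2- and 3-slate equations fully capture identifiability. Once these points are settled, $\{R_n = 0\}$ is a proper algebraic subvariety of the parameter space and hence Lebesgue-negligible, which is the content of the theorem.
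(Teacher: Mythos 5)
Your high-level architecture matches the paper's: eliminate down to a univariate quartic, divide out the root corresponding to the true parameters to get a cubic, and define $R_n$ via resultants of the cubics arising from different index pairs. The difference, and the genuine gap, is in how the elimination is actually carried out. You work with $2$-slates $\{1,u\}$ and $3$-slates $\{1,u,v\}$ for general $n$, but for $n>3$ these equations carry denominators $a'(1)+a'(u)$, $a'(1)+a'(u)+a'(v)$, etc., that do not simplify, and your chain of substitutions never closes: the $2$-slate equation expresses $b'(u)$ as a rational function of $a'(u)$ (and the pivot values), the $3$-slate equation then gives \emph{one} relation between the two still-free quantities $a'(u)$ and $a'(v)$, and the simplex normalization $\sum_u a'(u)=1$ cannot be used to eliminate $b'(1)$ ``locally'' because it involves all $n$ coordinates at once. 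So the claim that everything collapses to a single univariate polynomial in $t=a'(1)$, let alone one of degree four, is asserted rather than derived. The paper avoids this by querying the \emph{full} slate $[n]$ and the $(n-1)$-slates $[n]\setminus\{j\}$: with the normalization $\sum_i a_i=\sum_i b_i=1$, the full-slate equation becomes the linear relation $a_i+\lambda b_i=C_{[n]}(i)$ and the $(n-1)$-slate equation becomes $\frac{a_i}{1-a_j}+\lambda\frac{b_i}{1-b_j}=C_{[n]\setminus\{j\}}(i)$; it is exactly this linearization that lets every unknown be written as an explicit rational function of the single variable $b_1$ and forces $b_1$ to satisfy a genuine quartic (the $(a_i,a_j,b_i,b_j)$-systems of Lemmas 3.2 and 4.2). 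If you insist on small slates you would have to redo the entire degree count, and there is no reason to expect degree four.

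Two smaller points. First, your ``second obstacle'' --- that agreement on slates of size $\ge 4$ must be implied by agreement on $2$- and $3$-slates --- is not needed: the hypothesis of the theorem is that the two mixtures agree on \emph{every} $S\subseteq\mathcal{U}$, so you are free to use whichever slate equations are convenient, and using only a subset of them can only enlarge the exceptional variety, which is harmless for the stated implication. Second, your ``first obstacle'' (nontriviality of $R_n$) is a legitimate concern, but note that the theorem as stated does not actually assert $R_n\not\equiv 0$; the paper addresses nontriviality only empirically, by exhibiting the enormous symbolic output of the resultant, so your perturbation idea would be a genuine addition if made rigorous --- but it does not repair the elimination step above.
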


\quad Theorem \ref{thm:main} shows that the identifiability of a mixture of two MNLs may only fail on some {\em algebraic variety} $R_n = 0$.
We refer to the books of Sturmfels \cite{MS21, Sturm02} for a gentle introduction to algebraic geometry with focus on the computational aspects.
More important than the theorem itself is the way to construct the multivariate polynomials $R_n$.
As we will see later, this and the problem of learning a mixture of two MNLs essentially boils down to the problem of finding the common roots of some univariate quartic equations.
Unlike \cite{CKT18}, we adopt purely an equation-solving approach which is more natural and transparent.

\quad Now we turn to the polynomial identifiability of the mixture model.
There is a rich body of literature, e.g. \cite{AGH14, BCV14, GHK15, HK13, MV10} on the polynomial identifiability of Gaussian mixture models.
The algorithms use the method of moments, which relies on rich structure in lower-order moments of the Gaussian mixture model.
However, this low rank property is not enjoyed by the mixture of (two) MNLs, and lower moments are not sufficient to characterize or to approximate the model parameters.
In fact, it is even unknown whether all $\mu$-mixtures of two MNLs over a $4$-item universe are identifiable.
Based on numerical experiments, we conjecture the latter to be true. 
This motivates the following {\em $k$-identifiability} assumption for some $k \ge 4$, which will be used to devise a polynomial learning algorithm for the $\mu$-mixture model.

\begin{Assumption}[$k$-identifiability]
\label{assump:k}
The $\mu$-mixtures of two MNLs over a $k$-item universe are identifiable.
\end{Assumption}

\begin{theorem}
\label{thm:mainbis}
Let Assumption \ref{assump:k} hold. 
For $n \ge k$, consider a $\mu$-mixture of two MNLs over an $n$-item universe with the weight functions $(a,b)$.
Also assume that there are $0 < c \leq C < 1$ such that for $u \in S$,
\begin{equation}
\label{eq:eqcd}
\frac{c}{|S|} \le \mathbb{P}^{\mu}(u|S) \le \frac{C}{|S|}, \quad \mbox{as } |S| \rightarrow \infty.
\end{equation}
Let $\epsilon > 0$ be sufficiently small.
There is an adaptive algorithm which outputs weights $(\widehat{a}, \widehat{b})$ such that 
\begin{equation*}
\max_{1 \le i \le n} \left(\frac{|\widehat{a}_i - a_i|}{a_i} + \frac{|\widehat{b}_i - b_i|}{b_i} \right) \le \epsilon,
\end{equation*}
with high probability, by using
\begin{itemize}[itemsep = 3 pt]
\item
(sample complexity) $\mathcal{O}(n^3/\epsilon^2)$ independent samples of $n$- and $(n-1)$-slates.
\item
(query complexity) \, $3n + \mathcal{O}(1)$ queries to the estimated $n$- and $(n-1)$-slates.
\end{itemize}
\end{theorem}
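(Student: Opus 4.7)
The plan is a three-stage estimate-then-solve algorithm: sample, identify a small ``seed'' of $k$ items using Assumption \ref{assump:k}, and then extend to all remaining items by linear algebra.

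\emph{Stage 1 (sampling).} First I would draw $\Theta(n^2/\epsilon^2)$ independent samples from each of the $n+1$ relevant slates---the full $n$-slate $\mathcal{U}$ and every $(n-1)$-slate $S_v := \mathcal{U} \setminus \{v\}$---for a total of $\mathcal{O}(n^3/\epsilon^2)$ samples. This produces empirical conditional distributions $\widehat{p}(u) := \widehat{\mathbb{P}}^{\mu}(u|\mathcal{U})$ and $\widehat{q}(u;v) := \widehat{\mathbb{P}}^{\mu}(u|S_v)$. Under (\ref{eq:eqcd}) the true values are $\Theta(1/n)$, so Chernoff--Hoeffding plus a union bound yields uniform multiplicative accuracy $\mathcal{O}(\epsilon)$ with high probability, with a comfortable margin to absorb the error amplification in later stages.

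\emph{Stage 2 (seed identification via Assumption \ref{assump:k}).} Fix a $k$-item subset $K \subseteq \mathcal{U}$. The crucial observation is that for $u,v\in K$ the true values
\[
p(u) = \mu\, a(u) + (1-\mu)\, b(u), \qquad q(u;v) = \frac{\mu\, a(u)}{1-a(v)} + \frac{(1-\mu)\, b(u)}{1-b(v)}
\]
depend only on the seed weights $\{a(i),b(i)\}_{i\in K}$ and on the global normalizations $\sum_u a(u)=\sum_u b(u)=1$. I would query the $\mathcal{O}(k^2)=\mathcal{O}(1)$ estimates $\widehat{p}(u)$ and $\widehat{q}(u;v)$ for $u,v\in K$, $u\neq v$, and then solve the induced polynomial system for the seed by invoking the quartic-reduction technique developed in the proof of Theorem \ref{thm:main}. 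Assumption \ref{assump:k} is what guarantees uniqueness of the resulting root, modulo the inherent $a\leftrightarrow b$ symmetry which one fixes by convention.

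\emph{Stage 3 (extension).} For each $v\notin K$ I would exploit the identity
\[
q(u;v) - p(u) \;=\; a(u)\,\xi(v) \;+\; b(u)\,\eta(v), \qquad \xi(v) := \frac{\mu\, a(v)}{1-a(v)},\;\; \eta(v) := \frac{(1-\mu)\, b(v)}{1-b(v)},
\]
applied to two fixed anchor items $u_1,u_2\in K$ whose weights are known from Stage 2. This is a $2\times 2$ linear system in $(\xi(v),\eta(v))$ whose determinant equals $a(u_1)\,b(u_2) - a(u_2)\,b(u_1)$, nonzero under mild genericity. Solving and then inverting $\xi,\eta\mapsto a(v),b(v)$ gives the weights at $v$. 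This consumes three probability values per $v\notin K$---namely $\widehat{q}(u_1;v),\widehat{q}(u_2;v)$ and $\widehat{p}(v)$ (the last used for averaging and consistency)---yielding $3(n-k)+\mathcal{O}(1)=3n+\mathcal{O}(1)$ total queries.

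\emph{Error analysis and main obstacle.} Stability in Stage 3 is standard: the $2\times 2$ coefficient matrix is well-conditioned under (\ref{eq:eqcd}), so multiplicative input error propagates by $\mathcal{O}(1)$ to the output. Stage 2 has a fixed, $n$-independent dimension, so its condition number is bounded by a model-dependent constant. The hard part will be the justification of Stage 2: I must show that the polynomial system built from $p,q$ restricted to $K$ still pins down $(a|_K,b|_K)$ uniquely, even though the data we query are not the same as the sub-slate distributions on $K$ that appear in the nominal statement of Assumption \ref{assump:k}. The intended bridge is the quartic reduction of Theorem \ref{thm:main}: any two seeds producing the same $p,q$ on $K$ must satisfy $R_k=0$, and Assumption \ref{assump:k} is precisely the hypothesis excluding this, so uniqueness of the seed follows.
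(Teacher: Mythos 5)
Your overall architecture---estimate the $n$- and $(n-1)$-slates, identify a $k$-item seed via Assumption \ref{assump:k}, then extend item by item with three queries each---is the same as the paper's, and your Stage 3 is a genuinely different and cleaner mechanism: the paper extends via the $(a_1,a_j,b_1,b_j)$-system \eqref{eq:42} and the quartic reduction of Lemma \ref{lem:algebra2}, expressing each $b_j$ as a function of $b_1$ and recovering the scales $\Sigma_a=\sum_{j\le k}a_j$, $\Sigma_b=\sum_{j\le k}b_j$ only at the end from $\sum_j a_j=\sum_j b_j=1$, whereas you linearize $q(u;v)-p(u)=a(u)\xi(v)+b(u)\eta(v)$ and solve a $2\times 2$ system per new item. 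The identity is correct and avoids the quartic, but it has two costs you should not gloss over: it needs the anchors' \emph{globally normalized} weights as input (which a seed obtained from sub-slate data alone cannot supply before $\Sigma_a,\Sigma_b$ are known), and its determinant $a(u_1)b(u_2)-a(u_2)b(u_1)$ vanishes for \emph{every} anchor pair in $K$ precisely when $a|_K\propto b|_K$, a degeneracy the hypotheses of the theorem do not exclude.

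The genuine gap is Stage 2. Assumption \ref{assump:k} says that the sub-slate distributions of a $2$-MNL \emph{over a $k$-item universe} determine its weights; the data you feed your seed system are instead the $n$- and $(n-1)$-slate probabilities $p(u),q(u;v)$ restricted to $u,v\in K$, a different collection of constraints (they involve the denominators $1-a(v)=\sum_{w\ne v}a(w)$ over all of $[n]$, not ratios within $K$). Your proposed bridge---``any two seeds producing the same $p,q$ on $K$ must satisfy $R_k=0$, and Assumption \ref{assump:k} excludes this''---fails on both ends: Theorem \ref{thm:main} proves only the one-sided implication (non-uniqueness implies $R_k=0$), so Assumption \ref{assump:k} does not imply $R_k\ne 0$; and in any case $R_k$ is built from the $k$-universe system \eqref{eq:41}, not from your large-slate system on $K$. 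The paper sidesteps this by applying Assumption \ref{assump:k} to exactly the object it governs: the rescaled restriction $(a_j/\Sigma_a,\,b_j/\Sigma_b)_{j\in[k]}$ is a $2$-MNL over a $k$-item universe whose sub-slate oracle $C_T$, $T\subseteq[k]$, can be queried with $\mathcal{O}(1)$ extra queries, so the normalized seed $(a^{[k]},b^{[k]})$ is unique by hypothesis, and the unknown scales are recovered after the extension step. To repair your argument you must either adopt that route, or prove separately that the large-slate data restricted to $K$ pin down the seed---a statement that does not follow from Assumption \ref{assump:k} as written.
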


\quad Theorem \ref{thm:mainbis} shows the polynomial identifiability of the $\mu$-mixture model under suitable conditions.
In addition to Assumption \ref{assump:k}, we postulate that the items in the slate are comparable to each other, which leads to \eqref{eq:eqcd}. 
Note that our algorithm takes half the number of queries that the adaptive algorithm in \cite{CKT18} uses to learn a uniform mixture of two MNLs, but it requires querying to large slates.

\quad The contributions of the paper are threefold:
\begin{enumerate}
\item[(i)]
We show that the identifiability of the mixture model does not cause much a problem, and it may only fail on some `small' algebraic variety of a negligible measure.
This is important to develop further Baysian nonparametric models, where user choice is modeled by a mixture of random MNLs.
That is, both $\mu$ and $(a,b)$ are random, and are drawn from some distributions. 
\item[(ii)]
We show that identifying or learning a mixture of two MNLs is reduced to solving a system of univariate quartic equations.
This gives a possible way to prove Assumption \ref{assump:k} -- the identifiability of any mixture of two MNLs over some finite universe.
\item[(iii)]
We provide a learning algorithm for an arbitrary mixture of two MNLs, which requires
a polynomial number of samples, and a linear number of queries. 
To the best of our knowledge, this paper is the first one that discusses the polynomial identifiability of mixtures of MNLs.
\end{enumerate}
As will be clear in the later proof, even the mixing parameter $\mu$ is unknown, Theorem \ref{thm:main} holds in the sense that the identifiability of $(\mu, a, b)$ may only fail on a negligible set.
However, it is not clear how to estimate efficiently the parameter $\mu$ from sample data.
The remaining issues, which seem to be technically challenging, are polynomial conditions in nature.
We hope that this work will draw attention to experts of algebraic geometry and symbolic computations,
so that advanced techniques in these domains can be used or developed to solve the conjectures in the paper.

\quad To conclude the introduction, let us mention a few relevant references.
There are a line of works discussing heuristic approaches to learn mixtures of MNLs by simulation \cite{Ge08, GB13, HJR03, Train09}.
Mixtures of MNLs have also been studied in the context of revenue maximization by \cite{BGG16, RSTT}.
More related to this work are \cite{CKT182, OS14, ZPX16, ZX19}, where different oracles are assumed. 
We refer to \cite[Section 2]{CKT18} for a more detailed explanation of the aforementioned references, 
and various pointers to other related works.

\quad The rest of the paper is organized as follows.
Section \ref{sc2} collects results related to mixtures of MNLs, and compares with our results.
Section \ref{sc3} warms up with a discussion of the mixture of two MNLs over a $3$-item universe.
Section \ref{sc4} studies the general problem of learning a mixture of two MNLs over an $n$-item universe.
Section \ref{sc5} gives the conclusion.

\section{Preliminaries, existing results and comparison}
\label{sc2}

\quad This section provides background on the multinomial logit models, and recalls a few existing results in comparison with Theorems \ref{thm:main} and \ref{thm:mainbis}.
We follow closely the presentation in Chierichetti et al. \cite{CKT18}.
Consider the $n$-item universe, whose items are labeled by $[n]: = \{1, \ldots, n\}$. 
A {\em slate} is a non-empty subset of $[n]$, and a $k$-slate is a slate of size $k$.

\quad A {\em multinomial logit}, or simply {\em $1$-MNL} is determined by a weight function $a: [n] \to \Delta_{n-1}$,
where $\Delta_{n-1}:=\{(a_1, \ldots, a_n) \in \mathbb{R}^n_{+}: \sum_{i = 1}^n a_i = 1\}$ is the $(n-1)$-simplex, 
with $\mathbb{R}_{+}$ the set of nonnegative real numbers.
In this choice model, given a slate $T \subset [n]$, the probability that item $i \in T$ is selected is given by
\begin{equation}
\label{eq:1MNL}
D_T^a(i) = \frac{a_i}{\sum_{j \in T} a_j}, \quad i \in T.
\end{equation}
One can also take a weight function $a: [n] \to \mathbb{R}_{+}$, and normalizing each $a_i$ by $\sum_{j = 1}^n a_j$ will not affect the selection probability \eqref{eq:1MNL}.
A mixture of two multinomial logits, or simply {\em $2$-MNL} $\mathcal{A} = (a,b,\lambda)$ is specified by two weight functions $a, b$, and a mixing parameter $\lambda > 0$ in such a way that the probability that item $i \in T$ is selected in the slate $T$ is
\begin{equation}
\label{eq:2MNL}
D_T^{\mathcal{A}}(i) = \frac{1}{1+ \lambda} \frac{a_i}{\sum_{j \in T} a_j}  + \frac{\lambda}{1+\lambda} \frac{b_i}{\sum_{j \in T} b_j}, \quad i \in T.
\end{equation}
For later simplification, we use the parameter $\lambda > 0$ instead of $\mu:=1/(1+\lambda) \in (0,1)$ as the mixing weight.
So given a slate $T \subset [n]$, $\mathcal{A}$ first chooses the weight function $a$ with probability $1/(1+\lambda)$ and $b$ with probability $\lambda/(1+\lambda)$, and then behaves as the corresponding $1$-MNL.
For ease of presentation, we drop the superscript and write $D_T$ instead of $D^{\mathcal{A}}_T$ if there is no ambiguity.
If $\lambda = 1$ or $\mu = 1/2$, the choice model is called a uniform $2$-MNL.

\quad In \cite{CKT18}, the authors considered the problem of reconstructing the parameters $(a,b)$ of the uniform $2$-MNL, assuming an oracle access to $D_T(i)$ for all $T \subset [n]$ and $i \in T$.
Their results are summarized in the following theorem.

\begin{theorem}
\label{thm:CKT}
Let $n \ge 3$, and $\mathcal{A} = (a,b,1)$, $\mathcal{A}' = (a', b', 1)$ be two uniform $2$-MNL over $[n]$. 
Then:
\begin{itemize}[itemsep = 3 pt]
\item[(i)]
$\mathcal{A}$ and $\mathcal{A}'$ agree on each $T \subset [n]$, i.e. $D_T^{\mathcal{A}} = D_T^{\mathcal{A}'}$ for each $T \subset [n]$ if and only if 
$a = a'$, $b = b'$, or $a = b'$, $b = a'$.
\item[(ii)]
Any adaptive algorithm for $2$-MNL which queries to $k$-slates requires $\Omega(n/k)$ queries.
\item[(iii)]
There is an adaptive algorithm to learn a uniform $2$-MNL with $6n + \mathcal{O}(1)$ queries to $2$- and $3$-slates.
\end{itemize}
\end{theorem}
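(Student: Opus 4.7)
The plan is to treat the three parts of Theorem \ref{thm:CKT} separately, with part (i) on identifiability being the linchpin that drives both the lower bound and the algorithmic upper bound.

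For part (i), I would first establish the base case $n = 3$. Fix the universe $\{1,2,3\}$, set $x = a_1/a_3$, $y = a_2/a_3$, $u = b_1/b_3$, $v = b_2/b_3$, and likewise primed variables for $(a',b')$. The three $2$-slate probabilities $D_{\{1,2\}}, D_{\{1,3\}}, D_{\{2,3\}}$ together with one of the nontrivial $3$-slate probabilities give four rational equations in $(x,y,u,v)$. Clearing denominators turns each equation into a low-degree polynomial; eliminating variables systematically (e.g. by resultants) should collapse the system to a quadratic in a single unknown whose two roots correspond exactly to the swap $(a,b) \leftrightarrow (b,a)$. For general $n \geq 3$, I would bootstrap: once the $3$-item case is identifiable, every triple $\{i,j,k\} \subseteq [n]$ determines a pair of restricted weights up to swap. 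The remaining task is to glue these local identifications into a global one by fixing the swap on one reference triple and propagating to all overlapping triples, which works because any two triples sharing two items force a consistent choice of swap.

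For part (ii), a dimension-counting adversary argument should suffice. A uniform $2$-MNL over $[n]$ is parametrized by $2(n-1)$ numbers after the simplex normalization, while each query to a $k$-slate returns a distribution supported on $k$ outcomes, yielding at most $k-1$ independent scalar constraints. Hence any algorithm that fully identifies the weights must make $\Omega(n/k)$ queries. To make this rigorous in the adaptive setting one can construct two $2$-MNLs agreeing on every $k$-slate the algorithm queries during its first $o(n/k)$ queries, by perturbing weights in directions orthogonal to the accumulated linear constraints and then renormalizing; the identifiability of part (i) ensures these perturbations can be chosen so that the resulting weights are genuinely distinct instances. For part (iii), the natural scheme is seed-and-extend. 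First spend $O(1)$ queries on the slates contained in $\{1,2,3\}$ to reconstruct $(a_1, a_2, a_3, b_1, b_2, b_3)$ up to swap via part (i). Then, for each new item $i \in \{4, \ldots, n\}$, issue a constant number of $2$- and $3$-slate queries involving $i$ together with previously resolved items; the mixture equations at these slates form a small rational system in the two unknowns $(a_i, b_i)$ which, by the $n=3$ identifiability, admits a unique solution once the swap on the older items has already been fixed. Budgeting six queries per new item gives the $6n + O(1)$ bound.

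The main obstacle lies inside part (i): showing that the four-variable rational system for $n=3$ really reduces to a quadratic whose two roots are precisely the swap pair, with no spurious or degenerate solutions, requires careful algebraic manipulation and a non-degeneracy check ruling out parameter configurations where the resultant collapses. Once that base case is clean, the gluing step, the information-theoretic lower bound, and the extension algorithm follow from comparatively standard arguments; indeed, the forthcoming Theorem \ref{thm:main} of the present paper is precisely the generalization of this algebraic reduction to arbitrary mixing weights, where the quadratic must be replaced by a system of univariate quartics.
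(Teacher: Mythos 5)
First, note that the paper does not prove Theorem \ref{thm:CKT} at all: it is an imported summary of the results of \cite{CKT18} (``Their results are summarized in the following theorem''), so there is no internal argument to compare yours against. Your sketch does track the structure of the cited proof --- a $3$-item base case for identifiability, gluing over overlapping triples, an adversary lower bound, and a seed-and-extend algorithm --- but it contains a genuine gap exactly where the theorem is hardest. In part (i) you write that eliminating variables ``should collapse the system to a quadratic in a single unknown whose two roots correspond exactly to the swap.'' That sentence is the theorem, not a proof of it. The present paper's own Sections \ref{sc3} and \ref{sc4} show why this cannot be waved through: for general $\lambda$ the identical elimination produces a \emph{quartic} $P_{12}(b_1)=0$, and the paper exhibits an explicit instance ($\lambda = 2$, $a'_1=a'_2=2/5$, $b'_1=b'_2=3/10$) where the reduced system admits a second, fully admissible solution that is not a swap. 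The content of part (i) is that at $\lambda=1$ the swap symmetry forces both $b'_1$ and $a'_1$ to be roots and the residual quadratic factor never contributes an admissible solution of the \emph{full} system; establishing that last clause requires the explicit algebra (or the case analysis of \cite{CKT18}), and deferring it as ``a non-degeneracy check'' leaves the proof incomplete.

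Two smaller issues. In the gluing step for general $n$, a triple on which the restricted weights satisfy $a|_{\{i,j,k\}} \propto b|_{\{i,j,k\}}$ does not determine the swap, so the propagation argument needs to route around such triples (or handle $a=b$ globally as a separate case). In part (ii), counting ``independent scalar constraints'' against the parameter dimension does not by itself bound the number of queries for a nonlinear system; the standard and cleaner argument is that an adaptive algorithm making $o(n/k)$ queries to $k$-slates leaves some item entirely untouched, and the weights of an untouched item can be varied to produce two $2$-MNLs consistent with every answer given. Your perturbation-along-the-variety repair can be made to work, but as stated it presupposes that the accumulated constraints cut out a smooth positive-dimensional set, which is an additional claim to verify.
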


\quad As explained in the introduction, a weakness of \cite{CKT18} is that they assume the distribution of the slate $D_T$ is known. 
So they learn from the oracle distribution, not from the user samples.
In order to adapt the results of \cite{CKT18} for samples, we need the following elementary result.

\begin{lemma}
\label{lem:sample}
Let $p = (p_1, \ldots, p_n)$ be a distribution over $[n]$, 
and $\widehat{p}$ be the empirical distribution from $N$ independent samples of $p$:
\begin{equation*}
\widehat{p}_i = \frac{1}{N}\sum_{\ell = 1}^N 1(X_{\ell} = i), \quad 1 \le i \le n,
\end{equation*}
where $X_1, \ldots, X_N$ are i.i.d. copies from the distribution $p$.
Assume that there are $0 < c \le C < 1$ such that
\begin{equation}
\label{eq:bound}
\frac{c}{n} \le p_i \le \frac{C}{n}.
\end{equation}
If $N \gg n^3$, then with probability $1 - \mathcal{O}(n^3/N)$,
\begin{equation*}
\sup_{1 \le i \le n} |p_i - \widehat{p}_i| \le c_0 \left(\frac{n}{N}\right)^{1/2} \quad \mbox{for some } c_0 > 0.
\end{equation*}
\end{lemma}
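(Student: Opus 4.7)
The plan is to bound $|\widehat p_i - p_i|$ coordinate by coordinate via a standard concentration inequality, and then combine the per-coordinate bounds via a union bound over $i \in [n]$.

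First, for each fixed $i$ the count $N \widehat p_i = \sum_{\ell=1}^N 1(X_\ell = i)$ is a Binomial$(N,p_i)$ random variable, so $\mathbb{E}[\widehat p_i] = p_i$, and using only the upper bound $p_i \le C/n$ from \eqref{eq:bound},
\begin{equation*}
\mathrm{Var}(\widehat p_i) \;=\; \frac{p_i(1-p_i)}{N} \;\le\; \frac{C}{nN}.
\end{equation*}
Applying Chebyshev's inequality at the scale $t = c_0 (n/N)^{1/2}$ yields $\mathbb{P}(|\widehat p_i - p_i| > t) \le C/(c_0^2 n^2)$, and a union bound over the $n$ items gives $\mathbb{P}(\sup_i |\widehat p_i - p_i| > t) \le C/(c_0^2 n)$. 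After adjusting $c_0$, this already suffices to prove the statement in the regime $n^3 \ll N \lesssim n^4$. For $N \gg n^4$ I would instead use Bernstein's inequality, which under the same threshold $t$ produces a per-coordinate tail of the form $2\exp(-\Omega(Nt^2/p_i)) = 2\exp(-\Omega(n^2))$, easily absorbed into $\mathcal{O}(n^3/N)$ after the union bound.

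The only place where the cubic scaling $N \gg n^3$ really enters is in guaranteeing $t = c_0 (n/N)^{1/2} \ll 1/n \asymp p_i$: this places the deviation in the Gaussian (small-deviation) regime of the Binomial, so that both Chebyshev and Bernstein deliver their expected scalings and the union bound remains loss-free. No step is substantively hard; the main obstacle is purely bookkeeping, namely carrying the constants $c,C$ through the variance bound and choosing $c_0$ so that the exponent in Bernstein (or the denominator in Chebyshev) beats the $n$-factor loss from the union bound.
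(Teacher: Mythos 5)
Your proof is correct, but it takes a genuinely different route from the paper's. The paper normalizes $\widehat p_i - p_i$ by its standard deviation $\sigma_i = \sqrt{p_i(1-p_i)} \asymp n^{-1/2}$ and invokes the Berry--Esseen theorem at $t = n$ standard deviations, so each coordinate fails with probability at most $\mathbb{P}(|Z|>n)$ plus the Berry--Esseen error $2\rho_i/(\sigma_i^3\sqrt{N}) \asymp (n/N)^{1/2}$; the union bound then gives a total failure probability of order $(n^3/N)^{1/2} + n\,\mathbb{P}(|Z|>n)$. Your Chebyshev/Bernstein argument is more elementary and in fact sharper: the additive Berry--Esseen error dominates the paper's bound, so the paper's own proof only delivers failure probability $\mathcal{O}\bigl((n^3/N)^{1/2}\bigr)$ rather than the $\mathcal{O}(n^3/N)$ stated in the lemma, whereas your Chebyshev bound $\mathcal{O}(1/n)$ matches the stated rate throughout $n^3 \ll N \lesssim n^4$ and Bernstein covers larger $N$. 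Two minor remarks. First, the two-regime split is unnecessary: Bernstein alone works for all $N \gg n^3$, precisely because $t = c_0(n/N)^{1/2} \ll p_i$ in that range, which is the observation you already make. Second, for extremely large $N$ (say $N \ge e^{n^2}$) even the tail $n e^{-\Omega(n^2)}$ is not literally $\mathcal{O}(n^3/N)$ --- the same caveat applies to the paper's $n\,\mathbb{P}(|Z|>n)$ term --- but this is immaterial for the application, where $N = \mathcal{O}(n^3/\epsilon^2)$ and the downstream results only claim ``with high probability.''
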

\begin{proof}
Note that $\widehat{p}_i$ is Binomial$(p_i, N)$.
Let $\sigma_i = \sqrt{p_i (1-p_i)}$, and $\rho_i: = \mathbb{E}|\widehat{p}_i - p_i|^3$.
By assumption \eqref{eq:bound}, we have
\begin{equation*}
\sigma_i \asymp \frac{1}{\sqrt{n}}, \quad \rho_i \asymp \frac{1}{n}.
\end{equation*}
Let $Y: = \sqrt{N}(p_i - \widehat{p}_i)/\sigma$ and $Z \sim $ Normal$(0,1)$. 
By the Berry-Esseen bound, for any $t > 0$,
\begin{equation}
\label{eq:BE}
|\mathbb{P}(|Y_i| >t) - \mathbb{P}(|Z| > t)| \le \frac{2 \rho_i}{\sigma^3 \sqrt{N}}
\end{equation}
By taking $t = n$ in \eqref{eq:BE}, we have $|p_i - \widehat{p}_i| > \sqrt{n/N}$ with probability at most
$\mathbb{P}(Z > n) + \sqrt{n/N}$.
The union bound yields
\begin{equation*}
\mathbb{P}(\cup_{i = 1}^n \{|p_i - \widehat{p}_i| > \sqrt{n/N}\}) \le \sqrt{\frac{n^3}{N}} + n \mathbb{P}(|Z| > n).
\end{equation*}
This leads to the desired result when $N \gg n^3$.
\end{proof}

\quad As an easy consequence of Lemma \ref{lem:sample} and \cite[Theorems 5]{CKT18}, 
the following result shows the polynomial identifiability of the uniform mixture of two MNLs.
\begin{theorem}
\label{thm:sample}
Consider a uniform mixture of two MNLs over an $n$-item universe with the weight functions $(a,b)$.
Assume that \eqref{eq:eqcd} holds for $\mu = 1/2$.
Let $\epsilon > 0$ be sufficiently small.
There is an adaptive algorithm which outputs weights $(\widehat{a}, \widehat{b})$ such that 
\begin{equation*}
\max_{1 \le i \le n} \left(\frac{|\widehat{a}_i - a_i|}{a_i} + \frac{|\widehat{b}_i - b_i|}{b_i} \right) \le \epsilon,
\end{equation*}
with high probability, by using
\begin{itemize}[itemsep = 3 pt]
\item
(sample complexity) $\mathcal{O}(1/\epsilon^2)$ independent samples of $2$- and $3$-slates.
\item
(query complexity) \, $6n + \mathcal{O}(1)$ queries to the estimated $2$- and $3$-slates.
\end{itemize}
\end{theorem}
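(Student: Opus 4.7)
The plan is to run the oracle-based adaptive algorithm of Theorem~\ref{thm:CKT}(iii), replacing every oracle call by an empirical estimate, and to control the resulting error by combining Lemma~\ref{lem:sample} with a sensitivity analysis of the CKT reconstruction map. I would keep verbatim the schedule of $6n+\mathcal{O}(1)$ adaptive queries to $2$- and $3$-slates supplied by that algorithm. Because each queried slate $T$ has size at most $3$, and condition~\eqref{eq:eqcd} forces $a_i,b_i=\Theta(1/n)$, every coordinate $D_T(i)$ of the true slate distribution is bounded above and below by positive constants depending only on $c,C$.

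Next, for each queried slate $T$ I would draw $N$ independent samples from the $\mu$-mixture restricted to $T$ and form the empirical distribution $\widehat{D}_T$. Applying Lemma~\ref{lem:sample} with the reduced universe $T$ of cardinality $|T|\le 3$ gives $\sup_{i\in T}|\widehat{D}_T(i)-D_T(i)|\le c_0/\sqrt{N}$ with probability $1-\mathcal{O}(1/N)$. A union bound over the $6n+\mathcal{O}(1)$ queried slates keeps the global failure probability small as soon as $N\gg n$, and choosing $N=\mathcal{O}(1/\epsilon^{2})$ per slate makes every $\widehat{D}_T(i)$ accurate to additive, and hence (since $D_T(i)=\Theta(1)$) also relative, error $\mathcal{O}(\epsilon)$.

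The remaining step is a sensitivity analysis of the CKT reconstruction. The algorithm of \cite{CKT18} recovers $(a,b)$ from the queried $D_T(i)$'s through explicit rational formulas whose denominators involve differences and products of these values and are bounded below by positive constants under the $\Theta(1)$ bounds just established. Since the MNL weights are identified only up to ratios that are then rescaled to lie in $\Delta_{n-1}$, a relative error $\mathcal{O}(\epsilon)$ in each ratio $a_i/a_j$ or $b_i/b_j$ translates, after normalization $\sum a_i=\sum b_i=1$, into the same relative error in each $a_i,b_i$; a straightforward first-order computation shows no amplification by $n$ appears. On the high-probability event of the previous step, the adaptive decisions of the empirical algorithm coincide with those of the oracle algorithm, since every branching threshold sits at distance $\Omega(1)$ from the observed value.

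The main obstacle is precisely this Lipschitz step: one must open the reconstruction formulas of \cite{CKT18}, verify that every denominator is bounded below by a positive constant in the regime of~\eqref{eq:eqcd}, and check that propagation through the simplex normalization does not insert a factor of $n$ into the per-slate sample complexity. Once that is established, the theorem follows as a routine combination of Theorem~\ref{thm:CKT}(iii) and Lemma~\ref{lem:sample}, with the query complexity inherited verbatim from the oracle algorithm.
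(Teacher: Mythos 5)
Your proposal follows exactly the route the paper takes: the paper states Theorem~\ref{thm:sample} as an immediate consequence of Lemma~\ref{lem:sample} applied to the queried $2$- and $3$-slates (where \eqref{eq:eqcd} makes every $D_T(i)=\Theta(1)$) combined with the oracle algorithm of Theorem~\ref{thm:CKT}(iii), and gives no further detail. Your write-up is correct and in fact more explicit than the paper about the one nontrivial step, the stability of the CKT reconstruction formulas under $\mathcal{O}(\epsilon)$ perturbation of the oracle values, which the paper leaves implicit.
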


\quad Now we compare Theorem \ref{thm:sample} with Theorem \ref{thm:mainbis}. 
For a uniform mixture of two MNLs, it requires $\mathcal{O}(1/\epsilon^2)$ samples of $2$- and $3$-slates to achieve a given accuracy $\epsilon$.
This has advantage in both sample size and slate size over the $\mathcal{O}(n^3/\epsilon^2)$ samples of $\mathcal{O}(n)$-slates in our algorithm.
However, the algorithm in \cite{CKT18} for the uniform mixture relies on the identifiability of the model over a $3$-item universe,
which is not available for an arbitrary mixture of two MNLs.
Given the estimated slates, our algorithm for an arbitrary mixture requires half the number of queries that is used to learn a uniform mixture model.
It is interesting to know whether there is an algorithm which only queries to small slates with polynomial sample complexity, and linear query complexity.
We leave the problem open.

\section{Identifiability of $2$-MNL on the $3$-item universe}
\label{sc3}

\quad In this section, we study a $2$-MNL on the $3$-item universe $\{1,2,3\}$.
Assume that the mixing parameter $\lambda > 0$ is known.
So we only need to reconstruct the $1$-MNL weights $a = (a_1, a_2, a_3)$ and $b = (b_1, b_2, b_3)$.
As pointed out in \cite{CKT18}, for $\lambda \ne 1$ the oracle $(D_{\{1,2,3\}}(\cdot), D_{\{1,2\}}(\cdot), D_{\{1,3\}}(\cdot),$\\
$D_{\{2,3\}}(\cdot))$ does not uniquely determine the weights $(a_1, a_2, a_3, b_1, b_2, b_3)$.
The main point of Theorem \ref{thm:main} is that this situation rarely happens, and we can characterize the instances 
where the uniqueness fails.
As will be seen in Section \ref{sc4}, the computation yielding the non-uniqueness characterization for $n = 3$ is 
a building block to study the identifiability of $2$-MNL for $n > 3$.

\quad Let the oracle $\left(D_{\{1,2,3\}}, D_{\{1,2\}}, D_{\{1,3\}}, D_{\{2,3\}}\right)$ be generated from the weights $(a', b') \in \Delta_2 \times \Delta_2$, e.g. $D_{\{1,2\}}(1) = \frac{1}{1 + \lambda}\frac{a'_1}{a'_1 + a'_2} + \frac{\lambda}{1 + \lambda}\frac{b'_1}{b'_1 + b'_2}$.
To simplify the notations, we denote $C_T: = (1 + \lambda) D_T$.
The problem involves solving the following system of equations:
\begin{subequations}
\label{eq:31}
\begin{align}
\displaystyle &\frac{a_i}{a_i + a_j} + \lambda \frac{b_i}{b_i + b_j} = C_{\{i,j\}}(i)  \qquad \mbox{for } i,j \in [3] \mbox{ and } i \ne j,  \label{eq:31a}\\
\displaystyle &a_i + \lambda b_i = C_{\{1,2,3\}}(i)  \qquad \qquad \qquad \mbox{for } i \in [3], \label{eq:31b} \\ 
\displaystyle &a_1 + a_2 + a_3 = b_1 + b_2 + b_3 = 1. \label{eq:31c}
\end{align}
\end{subequations}

So there are $6$ unknowns and 11 equations, $6$ from \eqref{eq:31a}, $3$ from \eqref{eq:31b} and $2$ from \eqref{eq:31c}.
Since
$\left(\frac{a_i}{a_i + a_j} + \lambda \frac{b_i}{b_i + b_j} \right) + \left(\frac{a_j}{a_i + a_j} + \lambda \frac{b_j}{b_i + b_j} \right) = 1 + \lambda$ and
$\sum_{i = 1}^3 (a_i + \lambda b_i) = 1 + \lambda$,
there are $7$ linearly independent equations. 
Implicitly, there are $6$ more inequalities: $0 \le a_i \le 1$, $0 \le b_i \le 1$ for $i \in [3]$.
The following lemma provides a simple way to narrow down the possible solutions to \eqref{eq:31}.
\begin{lemma}
\label{lem:algebra}
Assume that $(a_1, a_2, a_3, b_1, b_2, b_3)$ solves \eqref{eq:31}, and $b_1 \ne \frac{C_{\{1,2,3\}}(1)}{1 + \lambda}$.
Then:
\begin{itemize}[itemsep = 3 pt]
\item[(i)]
Each of $a_1, a_2, a_3, b_2, b_3$ can be written as a simple function of $b_1$, which is specified by \eqref{eq:32}--\eqref{eq:33}.
\item[(ii)]
$b_1$ solves a quartic equation given by \eqref{eq:34}.
\end{itemize}
\end{lemma}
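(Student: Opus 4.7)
The proof is by explicit algebraic elimination, reducing (3.1) to a univariate equation in $b_1$.

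\emph{Step 1: Linear elimination of the $a_i$.} Equation (3.1b) is linear and gives immediately $a_i = \alpha_i - \lambda b_i$ for $i \in [3]$, where $\alpha_i := C_{\{1,2,3\}}(i)$. So each $a_i$ is a simple affine function of $b_i$; in particular $a_1$ is already a function of $b_1$. Note that the additional constraint $\sum a_i = 1$ in (3.1c) becomes automatic once $\sum b_i = 1$, using $\sum_i \alpha_i = 1+\lambda$. This yields the formula for $a_1$ in (3.2) and reduces the unknowns to $b_2, b_3$.

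\emph{Step 2: Expressing $b_2, b_3$ in terms of $b_1$.} Take the pair $\{1,2\}$ equation from (3.1a) and clear denominators; after regrouping one obtains the bilinear identity
\[
c'_{12} a_1 b_1 + (1-c_{12}) a_1 b_2 + (\lambda-c_{12}) a_2 b_1 - c_{12} a_2 b_2 = 0,
\]
where $c'_{12}:=1+\lambda-c_{12}$. Treating this as linear in the pair $(a_2,b_2)$ and solving, then substituting $a_2 = \alpha_2 - \lambda b_2$ from Step 1, yields a closed-form expression of $b_2$ (and hence $a_2$) in terms of $b_1$. The hypothesis $b_1 \ne C_{\{1,2,3\}}(1)/(1+\lambda)$ enters exactly here: this condition is equivalent to $a_1 \ne b_1$ (since $a_1-b_1 = \alpha_1 - (1+\lambda)b_1$), and it ensures that the denominator arising when isolating $b_2$ does not vanish, so that the formula is well defined. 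Geometrically, the excluded locus is precisely where the two MNL components agree on item 1, which is the degeneracy associated to the trivial coincidence $a\equiv b$. An identical computation with the pair $\{1,3\}$ gives $b_3$ and $a_3$ as functions of $b_1$; these are the formulas (3.3).

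\emph{Step 3: The quartic.} After Steps 1 and 2, the only constraint still unused is the pair $\{2,3\}$ equation, or equivalently the normalization $b_1+b_2+b_3=1$ from (3.1c). Substituting the expressions from (3.3) and clearing denominators yields a single polynomial equation in $b_1$. Since each of $b_2(b_1)$ and $b_3(b_1)$ is rational of degree at most two after Step 2, a degree count on the cleared expression gives degree four in $b_1$; this is the quartic (3.4).

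\emph{Main obstacle.} The essential difficulty is symbolic bookkeeping: one must verify that after substitution and clearing the cleared polynomial in $b_1$ is exactly of degree four rather than higher (which would require genuine cancellations) or lower (which would hide information). The analysis must also check that the excluded value $b_1 = \alpha_1/(1+\lambda)$ accounts for precisely the degenerate factor $(a_1-b_1)$ coming from the trivial solution $a\equiv b$, so that no non-trivial solution is lost. Once this is confirmed, the reduction to a quartic falls out of routine polynomial elimination, and this is the structural content that will be reused for the general $n$-item case in Section 4.
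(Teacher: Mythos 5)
Your Step 1 agrees with the first half of the paper's \eqref{eq:32}. The gap is in Step 2. The cleared form of the pair-$\{1,2\}$ equation that you write down contains the cross term $-c_{12}a_2b_2$, so it is bilinear, not linear, in the pair $(a_2,b_2)$; a single bilinear equation cannot be ``solved'' for two unknowns, and once you substitute $a_2=\alpha_2-\lambda b_2$ that cross term produces $+c_{12}\lambda b_2^2$, so what you actually obtain is a genuine quadratic in $b_2$ whose leading coefficient $c_{12}\lambda$ does not vanish generically. Hence along your route $b_2$ is \emph{not} a rational (let alone ``simple'') function of $b_1$ --- you get two branches involving a square root --- and the degree count in your Step 3 collapses. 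Note also that the hypothesis $b_1\neq C_{\{1,2,3\}}(1)/(1+\lambda)$ cannot ``enter exactly here'' as you claim: the obstruction in your computation is the leading coefficient $c_{12}\lambda$ of the quadratic in $b_2$, which has nothing to do with that condition (your observation that the condition is equivalent to $a_1\neq b_1$ is correct but does not play the role you assign to it).

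The device you are missing is the paper's rewriting \eqref{eq:31abis}: using the normalization \eqref{eq:31c}, the pair-$\{i,j\}$ equation becomes $\frac{a_i}{1-a_k}+\lambda\frac{b_i}{1-b_k}=C_{\{i,j\}}(i)$ with $k$ the third index. Choosing the pair $\{2,3\}$ (so $k=1$) makes the denominators $1-a_1$ and $1-b_1$ depend only on $b_1$, and after substituting $a_2=\alpha_2-\lambda b_2$ the equation is \emph{linear} in $b_2$, yielding $b_2=N(b_1)/D(b_1)$ with $N$ quadratic and $D(b_1)=\lambda\bigl((1+\lambda)b_1-C_{\{1,2,3\}}(1)\bigr)$ linear; the hypothesis is precisely the non-vanishing of $D(b_1)$. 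There is no separate elimination for $b_3$: it is already $1-b_1-b_2$ from \eqref{eq:32}, so the normalization $b_1+b_2+b_3=1$ is consumed at that stage and cannot serve as your final constraint (and even if your Step 2 worked, clearing $b_1+b_2+b_3=1$ with linear denominators gives degree at most three, not four). The quartic \eqref{eq:34} instead comes from the remaining independent pair equation $\{1,3\}$ in the same rewritten form with $k=2$, where the quadratic numerator $N$ enters twice and drives the degree up to four.
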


\begin{proof}
By \eqref{eq:31b}--\eqref{eq:31c}, it is easy to see that $(b_1, b_2)$ determines the remaining variables by
\begin{equation}
\label{eq:32}
\begin{aligned}
& a_1 = C_{\{1,2,3\}}(1) - \lambda b_1, \quad a_2 = C_{\{1,2,3\}}(2) - \lambda b_2, \quad b_3 = 1 - b_1 - b_2, \\
& a_3 = 1 - C_{\{1,2,3\}}(1) - C_{\{1,2,3\}}(2) + \lambda(b_1 + b_2).
\end{aligned}
\end{equation}
Note that \eqref{eq:31a} can be rewritten as
\begin{equation}\tag{3.1a'}
\label{eq:31abis}
\frac{a_i}{1 - a_k} + \lambda \frac{b_i}{1 - b_k} = C_{\{i,j\}}(i) \qquad \mbox{for }  i \ne j, \, k \ne i, j.
\end{equation}
Specializing \eqref{eq:31abis} to $i = 2$, $j = 3$ and using \eqref{eq:32} to express $a_1,a_2$ in terms of $b_1, b_2$, we get
\begin{equation}
\label{eq:33}
b_2 = \frac{\left(C_{\{2,3\}}(2)(1 - C_{\{1,2,3\}}(1) + \lambda b_1) - C_{\{1,2,3\}}(2)\right)(1-b_1)}{\lambda \left((1+\lambda) b_1 - C_{\{1,2,3\}}(1) \right)}=: \frac{N(b_1)}{D(b_1)},
\end{equation}
where $N(b_1)$ is a quadratic function of $b_1$, and $D(b_1)$ is linear in $b_1$.
Further specializing \eqref{eq:31abis} to $i = 1$, $j = 3$ and using \eqref{eq:32}--\eqref{eq:33} to express $a_1, a_2, b_2$ in terms of $b_1$, we have
\begin{multline}
\label{eq:34}
C_{\{1,2,3\}}(1) \left((1 - C_{\{1,2,3\}}(2)) D(b_1) + \lambda N(b_1) \right) (D(b_1) - N(b_1)) \\
-(C_{\{1,2,3\}}(1) - \lambda b_1)(D(b_1) - N(b_1)) - \lambda b_1 \left((1 - C_{\{1,2,3\}}(2)) D(b_1) + \lambda N(b_1) \right) =0.
\end{multline}
The first term on the l.h.s. of \eqref{eq:34} is a quartic polynomial in $b_1$, and the other two terms are cubic polynomials in $b_1$.
\end{proof}

\quad The main point of Lemma \ref{lem:algebra} is to reduce the system of equations \eqref{eq:31} to that only involving the $4$-tuple $(a_1, a_2, b_1,b_2)$:
\begin{subequations}
\label{eq:35}
\begin{align}
\displaystyle &\frac{a_1}{1-a_2} + \lambda \frac{b_1}{1-b_2} = C_{\{1,3\}}(1), \quad \frac{a_2}{1-a_1} + \lambda \frac{b_2}{1-b_1} = C_{\{2,3\}}(2),  \label{eq:35a}\\
\displaystyle &a_1 + \lambda b_1= C_{\{1,2,3\}}(1), \quad  a_2 + \lambda b_2= C_{\{1,2,3\}}(2). \label{eq:35b}
\end{align}
\end{subequations}
Moreover, solving the system of equations \eqref{eq:35} is equivalent to solving a univariate quartic equation.
We call the equations \eqref{eq:35a}--\eqref{eq:35b} the {\em $(a_1,a_2,b_1,b_2)$-system}.
Note that if $b_1 = \frac{C_{\{1,2,3\}}(1)}{1 + \lambda}$ and $b_2 \ne  \frac{C_{\{1,2,3\}}(2)}{1 + \lambda}$, then similar to \eqref{eq:33} we can express $b_1$ in terms of $b_2$, and hence all the other variables in terms of $b_2$ by \eqref{eq:32}.
If $b_1 = \frac{C_{\{1,2,3\}}(1)}{1 + \lambda}$ and $b_2 =  \frac{C_{\{1,2,3\}}(2)}{1 + \lambda}$, it is clear that all the other variables are uniquely determined by \eqref{eq:32}.

\quad Recall that the values of  $\left(C_{\{1,2,3\}}, C_{\{1,2\}}, C_{\{1,3\}}, C_{\{2,3\}}\right)$ are generated from some weights $(a', b')$. 
This implies that the quartic equation $P_{12}(b_1) = 0$ given by \eqref{eq:34} has at least one real root in $[0,1]$, which is $b'_1$.
Here the subscript `$12$' indicates that the polynomial $P_{12}$ is associated with the $(a_1,a_2,b_1,b_2)$-system.
Let 
\begin{equation*}
Q_{12}(b_1): = \frac{P_{12}(b_1)}{(b_1 - b'_1)},
\end{equation*}
which is a cubic polynomial whose coefficients are rational functions of $(a'_1, a'_2, b'_1, b'_2)$.
Therefore, the identifiability of a $2$-MNL on the $3$-item universe, or equivalently the uniqueness of the solution to the system of equations \eqref{eq:31} reduces to the problem $(i)$ if the cubic polynomial $Q$ has a real roots $b''_1 \in [0,1]$ and $b''_1 \ne b'_1$; 
$(ii)$ if the corresponding $6$-tuple $(a''_1, a''_2, a''_3, b''_1, b''_2, b''_3)$ given by \eqref{eq:32}--\eqref{eq:33}
solves \eqref{eq:31}.
Algorithmically, this is rather easy to verify: Cardano's formula \cite{Cardano} solves any cubic equation. 
Then it suffices to check 
if the corresponding $(a''_1, a''_2, a''_3, b''_1, b''_2, b''_3) \in [0,1]^6$, and 
if the equation $\frac{a''_2}{a''_2 + a''_3} + \lambda \frac{b''_2}{b''_2+ b''_3} = C_{\{2,3\}}(2)$, which is part of \eqref{eq:31} but not in \eqref{eq:35} holds.

\quad Now we show that it is rarely the case that the system of equations \eqref{eq:31} has more than one solution $(a,b) \in \Delta_2 \times \Delta_2$.
In fact, it is even true that \eqref{eq:31} can barely have more than one solution $(a,b) \in \mathbb{R}^3 \times \mathbb{R}^3$.
\begin{proof}[Proof of Theorem \ref{thm:main} (n = 3)]
Consider $P_{12}(b_1), Q_{12}(b_1)$ associated with the $(a_1,a_2,b_1,b_2)$-system, and $P_{13}(b_1), Q_{13}(b_1)$ associated with the $(a_1,a_3,b_1,b_3)$-system.
Observe that the system \eqref{eq:31} has only one solution if the polynomials $P_{12}$ and $P_{13}$ have only one common root $b_1 = b'_1$, or equivalently the polynomials $Q_{12}$ and $Q_{13}$ do not have any common root. 
It is well known \cite[Lemma 3.6]{Harris95} that the latter holds if and only if 
\begin{equation*}
\mbox{Res}(Q_{12}, Q_{13}) \ne  0,
\end{equation*}
where $\mbox{Res}(Q_{12}, Q_{13})$ is the {\em resultant} of $Q_{12}$ and $Q_{13}$, the determinant of a $6 \times 6$ Sylvester matrix whose entries are the coefficients of $Q_{12}$ and $Q_{13}$.
Recall that the coefficients of $Q_{12}$, $Q_{13}$ are rational functions of $(a', b') = (a'_1, a'_2, a'_3, b'_1, b'_2, b'_3)$. 
By letting $R_3(a',b')$ be the polynomial corresponding to the numerator of $\mbox{Res}(Q_{12}, Q_{13})$, we have
\begin{multline*}
\left\{(a', b') \in \Delta_2 \times \Delta_2: \eqref{eq:31} \mbox{ has more than one solution} \right\} \\
\subset \left\{(a', b') \in \mathbb{R}^3 \times \mathbb{R}^3: R_3(a',b') = 0  \right\}.
\end{multline*}
That is, the uniqueness of the solution to \eqref{eq:31} may fail only for those $(a', b')$ on the algebraic variety $R_3(a', b') = 0$.
\end{proof}

\quad Of course, it is rather impossible to put down the expression of $R_3$ by hand. 
With the help of \texttt{Mathematica}, we get an expression of $R_3(a, b)/\lambda^6$ as follows:
\begin{figure}[h]
\includegraphics[width=1\columnwidth]{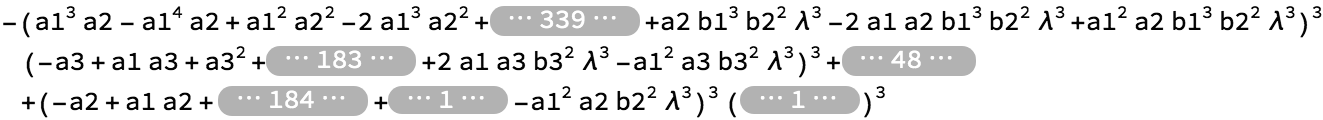}
\end{figure}

But it seems that even \texttt{Mathematica} finds it challenging to expand, or to simplify the above large expression.

\quad To conclude this section, we go back to the cubic polynomial $Q_{12}$.
This polynomial has either $3$ real roots, or $1$ real root and $2$ complex conjugate roots.
Since we are only concerned with the real roots of $Q_{12}$, it is natural to ask whether $Q_{12}$ can have $3$ real roots for some $\lambda > 0$ and $(a',b') \in \Delta_2 \times \Delta_2$.
The point is that if $Q_{12}$ has only $1$ real root, the analysis may further be simplified.
It turns out that this question is subtle. 
It is known \cite{Cardano} that $Q_{12}$ has $3$ real roots if the {\em discriminant} of $Q_{12}$ is nonnegative.
Note that the discriminant of $Q_{12}$ is a function of $(a'_1, a'_2, b'_1, b'_2)$.
For $\lambda = 2$, the functions \texttt{FindInstance} and \texttt{NSolve} in \texttt{Mathematica} do not find any $4$-tuple $(a'_1,a'_2,b'_1, b'_2) \in \Delta_1 \times \Delta_1$ such that the discriminant of $Q_{12}$ is nonnegative.
Moreover, the function \texttt{NMaximize} finds numerically the maximum of the discriminant of $Q_{12}$ over $\Delta_1 \times \Delta_1$, which is $-0.00317 < 0$.
These observations suggest that $Q_{12}$ have only $1$ real root when $\lambda = 2$.
However, for $\lambda = 5$ and $(a'_1,a'_2,b'_1, b'_2) = (0.0389099, 0.000870832, 0.0565171, 0.943483)$, 
\texttt{Mathematica} finds that $Q_{12}$ has $3$ real roots: $0.043916, 0.164599, 0.281671$.
Based on many experiments, we conjecture that there is a threshold $\lambda_{\tiny \mbox{thres}} > 0$ such that for $\lambda < \lambda_{\tiny \mbox{thres}}$, $Q_{12}$ has only $1$ real roots whatever the values of $(a'_1,a'_2,b'_1, b'_2)$, and for $\lambda \ge \lambda_{\tiny \mbox{thres}}$, $Q_{12}$ can have either $1$ or $3$ real roots depending on the values of $(a'_1,a'_2,b'_1, b'_2)$.

\section{Learning $2$-MNL on the $n$-item universe}
\label{sc4}

\quad This section is devoted to the study of a $2$-MNL over $[n]$ for $n > 3$.
The key idea is to reduce the system of equations over $2n$ unknowns to $(a_i,a_j,b_i,b_j)$-systems.
This also gives a promising way to prove the identifiability of a $2$-MNL on $[n]$ for some possible $n > 3$.
The following result records the general structure of learning a $2$-MNL over $n$ items.
Recall the definitions of $D_T$ and $C_T$ for $T \subset [n]$ from Section \ref{sc3}.

\begin{proposition}
Let $n \ge 3$, and $\mathcal{A} = (a,b, \lambda)$ be a $2$-MNL over $[n]$.
Assume that $\lambda > 0$ is known. 
Then learning $\mathcal{A}$ is equivalent to solving the following system of equations with $2n$ unknowns $(a_1, \ldots, a_n, b_1, \ldots, b_n) \in \mathbb{R}_{+}$:
\begin{subequations}
\label{eq:41}
\begin{align}
\displaystyle &\frac{a_i}{\sum_{j \in T} a_i} + \lambda \frac{b_i}{\sum_{j \in T} b_j} = C_T(i)  \qquad \mbox{for } T \subset [n] \mbox{ with } |T| \ge 2, \mbox{ and } i \in T,  \label{eq:41a}\\
\displaystyle & \sum_{i = 1}^n a_i = \sum_{i = 1}^n b_i = 1. \label{eq:41b}
\end{align}
\end{subequations}
So there are $2 + 2^{n-1} n - n$ equations, and at most $3 + 2^{n-1} (n -2)$ of these equations are linearly independent.
\end{proposition}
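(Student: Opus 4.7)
The proof has two components: establishing the claimed equivalence and performing the count of equations together with their linear dependencies. I would organize the argument as follows.

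\textbf{Equivalence.} First I would unwind the definitions. A $2$-MNL $\mathcal{A}=(a,b,\lambda)$ is specified precisely by the weight vectors $(a,b)\in\Delta_{n-1}\times\Delta_{n-1}$ with $\lambda$ known, so learning $\mathcal{A}$ amounts to recovering $(a_1,\ldots,a_n,b_1,\ldots,b_n)$ from the full collection $\{D_T(i):T\subset[n],\,|T|\ge2,\,i\in T\}$. By the definition \eqref{eq:2MNL} of $D_T$, multiplying through by $1+\lambda$ yields exactly \eqref{eq:41a}; and the simplex constraints on $a,b$ yield \eqref{eq:41b}. Conversely, any solution of \eqref{eq:41} furnishes a pair $(a,b)\in\Delta_{n-1}\times\Delta_{n-1}$ reproducing every slate distribution $D_T$. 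Thus the learning problem is precisely the system \eqref{eq:41}.

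\textbf{Total count.} The number of $T\subset[n]$ with $|T|=k$ is $\binom{n}{k}$, and each contributes $k$ equations to \eqref{eq:41a}. Hence the total from \eqref{eq:41a} is
\begin{equation*}
\sum_{k=2}^{n} k\binom{n}{k}=\sum_{k=0}^{n}k\binom{n}{k}-\binom{n}{1}=n\cdot 2^{n-1}-n,
\end{equation*}
using the standard identity $\sum_{k=0}^n k\binom{n}{k}=n\cdot 2^{n-1}$. Adding the two normalization equations from \eqref{eq:41b} gives $2+2^{n-1}n-n$, matching the claim.

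\textbf{Dependencies.} The crux is exhibiting enough linear relations to force the independent rank down to the stated bound. For every $T\subset[n]$ with $|T|\ge 2$, summing \eqref{eq:41a} over $i\in T$ and using that $D_T$ is a probability distribution on $T$ gives
\begin{equation*}
\sum_{i\in T}C_T(i)=(1+\lambda)\sum_{i\in T}D_T(i)=1+\lambda,
\end{equation*}
which is a linear identity among the equations indexed by $T$. This provides one relation per such subset, i.e.\ $2^n-n-1$ relations in total. Because the set of \eqref{eq:41a}-equations associated to distinct slates $T$ are disjoint (each equation involves a unique pair $(T,i)$), these relations are supported on disjoint index sets and are therefore automatically linearly independent. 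Subtracting them from the total count gives at most
\begin{equation*}
(n\cdot 2^{n-1}-n+2)-(2^n-n-1)=2^{n-1}(n-2)+3
\end{equation*}
linearly independent equations, as claimed.

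\textbf{Where the difficulty lies.} The equivalence and the sum-over-$i\in T$ relation are essentially bookkeeping; the only genuinely substantive point is the disjoint-support observation that promotes the $2^n-n-1$ relations to independence. One should also note that the proposition only asserts an upper bound on the number of independent equations, so there is no need to rule out further relations between slates of different sizes or between \eqref{eq:41a} and \eqref{eq:41b}; such additional dependencies may well exist for particular $C_T$, but they only strengthen the stated inequality. The real mathematical content — namely, extracting uniqueness of $(a,b)$ from this underdetermined-looking but highly structured system — is deferred to the subsequent sections that reduce \eqref{eq:41} to $(a_i,a_j,b_i,b_j)$-subsystems of the type analyzed in Section~\ref{sc3}.
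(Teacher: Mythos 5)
Your proposal is correct and follows essentially the same route as the paper: count $k$ equations per slate $T$ with $|T|=k$, observe that the probabilities summing to one over $i\in T$ kills one equation per slate, and conclude via the binomial identities $\sum_k k\binom{n}{k}=n\,2^{n-1}$ and $\sum_k(k-1)\binom{n}{k}=1+2^{n-1}(n-2)$. The only additions are the explicit unwinding of the equivalence and the disjoint-support remark justifying that the per-slate relations are independent, both of which the paper leaves implicit.
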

\begin{proof}
Note that for each $T \subset [n]$ with $|T| = k$, \eqref{eq:41a} contributes $k$ equations, and $k-1$ of these equations are linearly independent.
The result follows from the well known identities $\sum_{k = 1}^n k \binom{n}{k} =2^{n-1} n $ and $\sum_{k = 1}^n (k-1) \binom{n}{k} =  1 + 2^{n-1}(n-2)$.
\end{proof}

\quad The following $(a_i, a_j, b_i, b_j)$-system is an obvious extension of \eqref{eq:35} to the $n$-item universe:
\begin{subequations}
\label{eq:42}
\begin{align}
\displaystyle &\frac{a_i}{1-a_j} + \lambda \frac{b_i}{1-b_j} = C_{[n] \setminus \{j\}}(i), \quad \frac{a_j}{1-a_i} + \lambda \frac{b_j}{1-b_i} = C_{[n] \setminus \{i\}}(j),  \label{eq:42a}\\
\displaystyle &a_i + \lambda b_i= C_{[n]}(i), \quad  a_j + \lambda b_j= C_{[n]}(j). \label{eq:42b}
\end{align}
\end{subequations}
Similar to Lemma \ref{lem:algebra}, solving the system of equations \eqref{eq:41} boils down to solving a univariate quartic equation by using $(a_i,a_j,b_i,b_j)$-systems.

\begin{lemma}
\label{lem:algebra2}
Assume that $(a_1, \ldots, a_n, b_1, \ldots, b_n)$ solves \eqref{eq:41}, and $b_1 \ne \frac{C_{\{1,2,3\}}(1)}{1 + \lambda}$.
Then:
\begin{itemize}[itemsep = 3 pt]
\item[(i)]
If for each $i$ there exists a set $T(i)$ containing $i$ such that $b_i = \frac{C_T(i)}{1 +\lambda}$, 
then $a_i = C_{[n]}(i) - \frac{\lambda}{1 + \lambda} C_{T(i)}(i)$.
\item[(ii)]
Otherwise, assume without loss of generality $b_1 \ne \frac{C_T(1)}{1 + \lambda}$ for any set $T$ containing $i$. 
Then each of $a_1, \ldots, a_n, b_2, \ldots b_n$ can be written as a simple function of $b_1$,
and $b_1$ solves a quartic equation.
\end{itemize}
\end{lemma}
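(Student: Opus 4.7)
The plan is to adapt the argument of Lemma \ref{lem:algebra} to the $n$-item setting, using the $(a_i,a_j,b_i,b_j)$-systems \eqref{eq:42} as three-item-like ``windows'' into the full system \eqref{eq:41}. The starting observation is that specializing \eqref{eq:41a} to the full slate $T=[n]$ and invoking \eqref{eq:41b} produces, exactly as in \eqref{eq:31b},
\begin{equation*}
a_i + \lambda\, b_i = C_{[n]}(i), \qquad i \in [n],
\end{equation*}
so each $a_i$ is determined by $b_i$ and the whole problem collapses to recovering $(b_1,\ldots,b_n)$ from the oracle. This linear reduction plays the role of \eqref{eq:32} throughout.

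Part (i) is then immediate: whenever $b_i = C_{T(i)}(i)/(1+\lambda)$ for some slate $T(i)\ni i$, substitution into the display above gives $a_i = C_{[n]}(i) - \tfrac{\lambda}{1+\lambda} C_{T(i)}(i)$, as claimed. No further equation from \eqref{eq:41a} is needed.

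For part (ii), I would fix an arbitrary $j \ne 1$ and restrict attention to the $(a_1,a_j,b_1,b_j)$-system. This system has the same shape as \eqref{eq:35}: the equations \eqref{eq:42b} play the role of \eqref{eq:31b}, and the two equations in \eqref{eq:42a} play the role of the two specializations of \eqref{eq:31abis} used in Lemma \ref{lem:algebra}, with $C_{\{1,2,3\}}$ replaced by $C_{[n]}$ and the $2$-slate data replaced by $C_{[n]\setminus\{1\}}(j)$ and $C_{[n]\setminus\{j\}}(1)$. Transcribing verbatim the derivation of \eqref{eq:33} from \eqref{eq:31abis} with $i=j$, $k=1$, then gives a rational expression
\begin{equation*}
b_j = \frac{N_j(b_1)}{D_j(b_1)}, \qquad D_j(b_1) = \lambda\bigl[(1+\lambda) b_1 - C_{[n]}(1)\bigr],
\end{equation*}
with $N_j$ quadratic in $b_1$. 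The hypothesis that $b_1 \ne C_T(1)/(1+\lambda)$ for every $T\ni 1$ — in particular for $T=[n]$, which is the only case actually invoked — guarantees $D_j(b_1)\ne 0$. Substituting this expression and $a_1=C_{[n]}(1)-\lambda b_1$, $a_j=C_{[n]}(j)-\lambda b_j$ into the remaining equation $\tfrac{a_1}{1-a_j}+\lambda\tfrac{b_1}{1-b_j} = C_{[n]\setminus\{j\}}(1)$ of \eqref{eq:42a}, and clearing denominators, yields a quartic equation $P_{1j}(b_1)=0$ of the form \eqref{eq:34}. Letting $j$ range over $\{2,\ldots,n\}$ expresses every $b_j$ (hence every $a_j$) as a rational function of $b_1$, and $b_1$ satisfies, say, the quartic $P_{12}(b_1)=0$.

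The only conceptual obstacle beyond Lemma \ref{lem:algebra} is bookkeeping: each choice of $j \in \{2,\ldots,n\}$ yields its own quartic $P_{1j}$, and all of them must share $b_1$ as a root. For the present lemma any one of them suffices, so the proof reduces to transcribing the $n=3$ computation. The real content of this shared-root rigidity will only surface later, when one tries to use resultants $\mathrm{Res}(P_{1j},P_{1j'})$ to push Theorem \ref{thm:main}'s identifiability picture beyond three items, in direct analogy with the role of $\mathrm{Res}(Q_{12},Q_{13})$ at $n=3$.
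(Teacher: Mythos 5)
Your proposal is correct and follows essentially the same route as the paper: the paper's proof of part (ii) likewise reduces to the $(a_1,a_j,b_1,b_j)$-systems \eqref{eq:42} and invokes the computation of Lemma \ref{lem:algebra} verbatim (with $C_{\{1,2,3\}}$ replaced by $C_{[n]}$ and the $2$-slates by the $(n-1)$-slates), taking $P_{12}(b_1)=0$ as the quartic satisfied by $b_1$. You have merely written out the transcription that the paper leaves implicit, and your observation that only the non-vanishing of $(1+\lambda)b_1 - C_{[n]}(1)$ is actually needed is a correct reading of the (slightly garbled) hypothesis.
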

\begin{proof}
Part $(i)$ is straightforward. 
For part $(ii)$, consider the $(a_1,a_i, b_1, b_i)$-system for all $i \in \{2, \ldots, n\}$.
By Lemma \ref{lem:algebra}, $a_1, a_i, b_i$ are fully determined by $b_1$, 
and $b_1$ solves the quartic equation $P_{12} = 0$ associated with the  $(a_1, a_2, b_1, b_2)$-system.
\end{proof}

\quad Basically, Lemma \ref{lem:algebra2} shows that the system \eqref{eq:41} with approximately $2^{n-1} n$ equations
has at most $4$ solutions.
This simplification only makes use of $2n$ equations, which involves those in \eqref{eq:41a} with $|T| = n-1, n$. 
That is, queries to $(n-1)$-slates and $n$-slates.
Theorem \ref{thm:main} is then a consequence of Lemma \ref{lem:algebra2}.

\begin{proof}[Proof of Therorem \ref{thm:main} (general $n$)]
We follow the notations in Section \ref{sc3} which proves the result for $n = 3$.
Similarly, define the polynomials $P_{ij}$, $Q_{ij}$ associated with the $(a_i, a_j, b_i, b_j)$-system.
Recall that the coefficients of $P_{ij}$, $Q_{ij}$ are rational functions of $(a'_i, a'_j, b'_i, b'_j)$.
By Lemma \ref{lem:algebra2}, if the system of equations \eqref{eq:41} has more than one solution, 
then the polynomials $Q_{12}, Q_{13}, \ldots, Q_{1n}$ have a common root.
The latter implies that each pair $(Q_{1j}, Q_{1k})$ have a common root, which is equivalent to
\begin{equation*}
\mbox{Res}(Q_{1j}, Q_{1k}) = 0 \quad \mbox{for } j, k \in \{2, \ldots, n\} \mbox{ and } j < k,
\end{equation*}
where $\mbox{Res}(\cdot, \cdot)$ is the resultant of two polynomials.
Let $W_{1jk}$ be the multivariate polynomial in $(a'_1, a'_j, a'_k, b'_1, b'_j, b'_k)$ corresponding to the numerator of $\mbox{Res}(Q_{1j}, Q_{1k})$.
We have 
\begin{multline*}
\left\{(a', b') \in\Delta_{n-1} \times \Delta_{n-1}: \eqref{eq:41} \mbox{ has more than one solution} \right\} \\
 \subset \left\{(a', b') \in \mathbb{R}^n \times \mathbb{R}^n: W_{1jk} = 0 \mbox{ for all } 2 \le j < k \le n \right\}.
\end{multline*}
In particular, one can take $R_4 (a',b') = \sum_{2 \le j < k \le n} (W_{1jk})^2$ so that \eqref{eq:41} has the unique solution 
when $R_4(a',b') \ne 0$.
\end{proof}

\quad There are many ways to build $R_n$ in Theorem \ref{thm:main}. 
One can take $R_n = \sum_{(j,k) \in \mathcal{S}} (W_{1jk})^2$ for $\mathcal{S}$ any subset of $\{(j,k): 2 \le j < k \le n\}$, 
e.g. $R_n = \sum_{2 \le j < k \le n} (W_{jk})^2$ in the previous proof, or just $R_n = W_{1jk}$ for some $j, k$.
Given $(a,b) \in \Delta_{n-1} \times \Delta_{n-1}$, it is relatively easy to check numerically if $R_{n}(a,b) = 0$.
But as mentioned in Section \ref{sc3}, even in the simple case $R_n = W_{1jk}$ (with $6$ variables), \texttt{Mathematica} cannot output the expression of $R_n$, let alone doing further symbolic computations.

\quad Now we give a simpler $R_n$ when $n > 3$.
Consider the equations in \eqref{eq:41a} relating only $(a_1, a_2, b_1, b_2)$.
For $n >3$, in addition to the $(a_1, a_2, b_1, b_2)$-system there is one more:
\begin{equation}
\label{eq:43}
\frac{a_1}{a_1 +a_2} + \lambda \frac{b_1}{b_1 + b_2} = C_{\{1,2\}}(1).
\end{equation}
By injecting \eqref{eq:32}--\eqref{eq:33} into \eqref{eq:43}, 
we get $\widetilde{P}_{12}(b_1) = 0$ with $\widetilde{P}_{12}$ another quartic polynomial.
Let $\widetilde{Q}_{12}(b_1) = \widetilde{P}_{12}(b_1)/(b_1 - b'_1)$ be the corresponding cubic polynomial.
Another simple choice for $R_n$ is the numerator of $\mbox{Res}(Q_{12}, \widetilde{Q}_{12})$, 
which \texttt{Mathematica} outputs:
\begin{figure}[h]
\includegraphics[width=1\columnwidth]{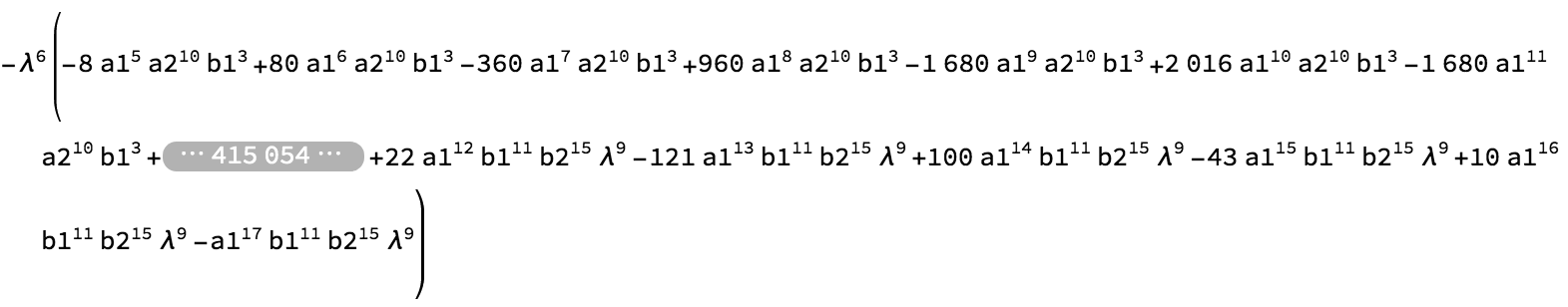}
\end{figure}

Such defined $R_n$ is a polynomial in $4$ variables, and is apparently simpler than the previous choices for $R_n$.
The expression of this $R_n$ has more than $400,000$ terms, and the maximum degrees of $a_1,a_2,b_1,b_2$ appearing in $R_n$ are $25,15,25,15$ respectively.
An interesting question is whether $\{R_n = 0\} \cap (\Delta_1 \times \Delta_1)$ is empty for all $\lambda > 0$.
If so, the system of equations \eqref{eq:42}--\eqref{eq:43} has a unique solution $(a'_1, a'_2,b'_1,b'_2)$ 
which implies that any $2$-MNL over $[n]$ is identifiable for $n  > 3$.
Unfortunately, the following example shows it is not the case.
For $\lambda = 2$, and $a'_1 = a'_2 = 2/5$, $b'_1 = b'_2 = 3/10$,
the system of equations \eqref{eq:42}--\eqref{eq:43} gives two solutions: $a_1 = a_2 = 2/5$, $b_1 = b_2 = 3/10$,
and $a_1 = a_2 = 5/19$, $b_1 = b_2 = 7/19$.

\quad In principle, determining if a $2$-MNL over $[n]$ is identifiable, or if the system of equations \eqref{eq:41} has only one solution reduces to determining if a set of about $2^{n-1} n$ univariate cubic equations have a common root.
The latter is equivalent to whether the {\em resolvent} of these cubic polynomials is zero, 
and this amounts to a fairly large number of multivariate polynomial equations on $(a',b')$, see e.g. \cite[Section 7]{Goldman06}.
So as $n$ increases, the set of $(a',b')$ for which the system of equations \eqref{eq:41} has more than $1$ solution
becomes more and more restrictive.
It is believable that there is a threshold $n_{\tiny \mbox{thre}} > 3$ such that \eqref{eq:41} has only one solution for $n \ge n_{\tiny \mbox{thre}}$.
Based on many experiments, we conjecture that $n_{\tiny \mbox{thre}} = 4$; that is, any $2$-MNL over $[n]$ for $n \ge 4$ is identifiable.
We leave this puzzle to interested readers.

\quad To finish this section, we prove Theorem \ref{thm:mainbis} on the polynomial identifiability of the mixture model.
\begin{proof}[Proof of Theorem \ref{thm:mainbis}]
We start with the identifiability, i.e. learn the weights from the oracle $C_T$.
Note that $(a_1, \ldots, a_k, b_1, \ldots, b_k)$ solves \eqref{eq:41a} for $n= k$ but with possibly $\sum_{j = 1}^k a_j$, $\sum_{j = 1}^k b_j \ne 1$.
By the $k$-identifiability, $(a_1, \ldots, a_k, b_1, \ldots, b_k)$ takes the form:
$a_j = a^{[k]}_j\Sigma_a$, $b_j = b^{[k]}_j \Sigma_b$ for $j \in [k]$,
where $(a^{[k]}_1, \ldots, a^{[k]}_k, b^{[k]}_1, \ldots, b^{[k]}_k)$ is the unique solution to \eqref{eq:41} for $n = k$, and $\Sigma_a = \sum_{j = 1}^k a_j$ and $\Sigma_b = \sum_{j = 1}^k b_j$ have yet to be determined.
It requires $\mathcal{O}(1)$ queries to find $(a^{[k]}_1, \ldots, a^{[k]}_k, b^{[k]}_1, \ldots, b^{[k]}_k)$.
For instance, by Lemma \ref{lem:algebra2}, there are at most $4$ possible solutions associated with the $(a_1,a_2,b_1,b_2)$-system, and a quartic equation is easily solved by Ferrari's method \cite{Cardano}. 
Then it suffices to check which one of these solutions satisfy other $2^{k-1}(k-2) - 3$ equations.
Next for each $j \in \{k+1, \ldots, n\}$, consider the $(a_1, a_j, b_1, b_j)$-system \eqref{eq:42} which has $3$ queries.
By Lemma \ref{lem:algebra2}, $b_j$ can be expressed in terms of $b_1$.
So the condition $\sum_{j = 1}^n b_j = 1$ determines $\Sigma_b$. 
Similarly, by expressing $a_j$ in terms of $a_1$, the condition $\sum_{j = 1}^n a_j = 1$ specifies $\Sigma_a$.
In total, it requires $\mathcal{O}(1) + 3(n -k) = 3n + \mathcal{O}(1)$ queries to learn the mixture model.

\quad Now we consider learning the weights from samples $\widehat{C}_T$. 
By Lemma \ref{lem:sample}, if $N \gg n^3$, we have $\sup_{1 \le i \le n}|\widehat{C}_T(i) - C_T(i)| \le c_0 (n/N)^{1/2}$ for $|T| = n-1, n$. 
Further by assumption \eqref{eq:eqcd}, we get
\begin{equation}
\label{eq:last}
\sup_{1 \le i \le n}\frac{|\widehat{C}_T(i) - C_T(i)|}{C_T(i)} \le c_1 \frac{n^{3/2}}{N^{1/2}},
\end{equation}
for some $c_1 > 0$.
The idea is to control the ratio $|\widehat{C}_T(i) - C_T(i)|/C_T(i)$ uniformly in $i$.
It is easily seen that for $N = \mathcal{O}(n^3/\epsilon^2)$, the term on the left side of \eqref{eq:last} is controlled by $\epsilon$, and we write $\widehat{C}_T(i) = C_T(i) (1 + \epsilon)$.
Since $\widehat{C}_T$ is not the oracle, the equation \eqref{eq:41} with $n = k$ and $\widehat{C}_T$ (instead of $C_T$) does not necessarily have a unique admissible solution. 
By continuity of complex roots with respect to polynomial coefficients, 
for $\epsilon$ sufficiently small, there is a unique root with real part between $(0,1)$ and the smallest imaginary part -- this gives $\widehat{b}^{[k]}_1$ and then $(\widehat{a}^{[k]}_1, \ldots, \widehat{a}^{[k]}_k, \widehat{b}^{[k]}_1, \ldots, \widehat{b}^{[k]}_k)$.
We repeat the procedure in the previous paragraph by substituting $C_T$ with $\widehat{C}_T$.
By the implicit function theorem, we have 
$\widehat{a}_j^{[k]} = a_j^{[k]}(1+ \mathcal{O}(\epsilon))$, $\widehat{b}_j^{[k]} = b_j^{[k]}(1+ \mathcal{O}(\epsilon))$, $\widehat{\Sigma}_a = \Sigma_a(1+\mathcal{O}(\epsilon))$ and $\widehat{\Sigma}_b = \Sigma_b(1+\mathcal{O}(\epsilon))$.
As a result,
\begin{equation*}
\frac{|\widehat{a}_j - a_j|}{a_j} = \frac{|\widehat{a}_j^{[k]} \widehat{\Sigma}_a - a^{[k]}_j\Sigma_a |}{a^{[k]}_j\Sigma_a} = \mathcal{O}(\epsilon).
\end{equation*}
Similar results hold for the weight function $b$.
\end{proof}

\section{Conclusion}
\label{sc5}

\quad In this paper, we study the problem of learning an arbitrary mixture of two multinomial logits. 
We have proved that the identifiability of the mixture models is less problematic, since it may only fail on a set of parameters of a negligible meausure.
The proof is based on a reduction of the learning problem to a system of univariate quartic equations.
We also proposed an algorithm to learn any mixture of two MNLs using a polynomial number of samples and a linear number of queries, under the assumption that a mixture of two MNLs over some finite universe is identifiable. 

\quad The paper also leaves a few problem for future research.
For instance, 
\begin{enumerate}
\item[(i)]
prove all the conjectures in Sections \ref{sc3} and \ref{sc4}.
\item[(ii)]
devise an algorithm to learn an arbitrary mixture of two MNLs using only small slates with polynomial samples and queries.
\item[(iii)]
consider the identifiability issue for both the mixing parameter $\lambda$ and the weight functions $(a,b)$.
\item[(iv)]
study the problem of learning a mixture of more than two multinomial logits.
\end{enumerate}
We hope that our work will trigger further developments on the mixture models.

\bibliographystyle{abbrv}
\bibliography{unique}
\end{document}